\newcommand{\dd}{\mathop{}\!{d}} 
\newcommand{\p}{\mathop{}\!{\partial}} 
\newcommand{\T}{^\mathrm{T}} 
\newcommand{\tr}{\mathrm{tr}} 
\newcommand{\bfone}{\bm{f}_1} 
\newcommand{\bfk}{\bm{f}_k} 
\newcommand{\bfi}{\bm{f}_i} 
\newcommand{\bfxi}{\bm{f}_i(X_i)} 
\newcommand{\bfj}{\bm{f}_j} 
\newcommand{\bfxj}{\bm{f}_j(X_j)} 
\newcommand{\bsigma}{\mathbf{\Sigma}} 
\newcommand{\blambda}{\mathbf{\Lambda}} 
\newcommand{\eqdef}{\stackrel{\mathrm{def}}{=}}
\newcolumntype{L}[1]{>{\raggedright\let\newline\\\arraybackslash\hspace{0pt}}m{#1}}
\newcolumntype{C}[1]{>{\centering\let\newline\\\arraybackslash\hspace{0pt}}m{#1}}
\newcolumntype{R}[1]{>{\raggedleft\let\newline\\\arraybackslash\hspace{0pt}}m{#1}}
\begin{document}
\title{Exploiting Partial Common Information Microstructure for Multi-Modal Brain Tumor Segmentation}
\titlerunning{Exploiting Multi-Modal Partial Common Information Microstructure}

\author{Yongsheng Mei \and Guru Venkataramani \and Tian Lan}

\authorrunning{Y. Mei et al.}
%
\institute{The George Washington University, Washington DC 20052, USA \\
\email{\{ysmei,guruv,tlan\}@gwu.edu}}
\maketitle              
\begin{abstract}
	Learning with multiple modalities is crucial for automated brain tumor segmentation from magnetic resonance imaging data. Explicitly optimizing the common information shared among all modalities (e.g., by maximizing the total correlation) has been shown to achieve better feature representations and thus enhance the segmentation performance. However, existing approaches are oblivious to partial common information shared by subsets of the modalities. In this paper, we show that identifying such partial common information can significantly boost the discriminative power of image segmentation models. In particular, we introduce a novel concept of partial common information mask (PCI-mask) to provide a fine-grained characterization of what partial common information is shared by which subsets of the modalities. By solving a masked correlation maximization and simultaneously learning an optimal PCI-mask, we identify the latent microstructure of partial common information and leverage it in a self-attention module to selectively weight different feature representations in multi-modal data. We implement our proposed framework on the standard U-Net. Our experimental results on the Multi-modal Brain Tumor Segmentation Challenge (BraTS) datasets outperform those of state-of-the-art segmentation baselines, with validation Dice similarity coefficients of 0.920, 0.897, 0.837 for the whole tumor, tumor core, and enhancing tumor on BraTS-2020.

\keywords{Multi-modal learning \and Image segmentation \and Maximal correlation optimization \and Common information.}
\end{abstract}

\section{Introduction}

Brain tumor segmentation from magnetic resonance imaging (MRI) data is necessary for the diagnosis, monitoring, and treatment planning of the brain diseases. Since manual annotation by specialists is time-consuming and expensive, recently, automated segmentation approaches powered by deep-learning-based methods have become ever-increasingly prevailing in coping with various tumors in medical images. FCN~\cite{long2015fully}, U-Net~\cite{ronneberger2015u}, and V-Net~\cite{milletari2016v} are popular networks for medical image segmentation, to which many other optimization strategies have also been applied~\cite{jiang2019two,zhou2018unet++,cciccek20163d,he2016deep}. The MRI data for segmentation usually has multiple modalities where each modality will convey different information and has its unique concentration. Due to this benefit, various approaches~\cite{petit2021u,jia2020h2nf,wang2020modality,isensee2019nnu,xu2019lstm} for segmenting multi-modal MRI images regarding brain tumors have been proposed with improved results.

In practice, the multi-modal data allows the identification of common information shared by different modalities and from complementary views~\cite{huang2020information}, thus achieving better representations by the resulting neural networks. From the information theory perspective, the most informative structure between modalities represents the feature representation of one modality that carries the maximum amount of information towards another one~\cite{huang2017information}. Efficiently leveraging common information among multiple modalities will uncover their latent relations and lead to superior performance. 

To this end, we propose a novel framework that can leverage the partial common information microstructure of multiple modalities in brain tumor segmentation tasks. Specifically, we formulate an optimization problem where its objective, masked correlation, is defined as the sum of a series of correlation functions concerning the partial common information mask (PCI-mask). PCI-masks contain variable weights that can be assigned for different feature representations selectively. By solving the masked correlation maximization, we can obtain specific weights in PCI-masks and explicitly identify the hidden microstructure of partial common information in multi-modal data. In contrast to existing works~\cite{huang2019information,huang2020information,xu2020maximal} that employ a maximal correlation (e.g., Hirschfeld-Gebelein-Renyi (HGR) maximal correlation~\cite{renyi1959measures}) to find the maximally non-linear correlated feature representations of modalities, we adopt the PCI-mask to identify a fine-grained characterization of the latent partial common information shared by subsets of different modalities. Meanwhile, during learning, we optimize and update PCI-masks in an online and unsupervised fashion to allow them to dynamically reflect the partial common information microstructure among modalities.

Solving the mentioned optimization problem generates the PCI-mask illuminating the principal hidden partial common information microstructure in feature representations of multi-modal data, visualized by dark regions in Figure~\ref{fig:gray}. To thoroughly exploit such an informative microstructure, we design a self-attention module taking the PCI-masks and concatenated feature representation of each modality as inputs to obtain the attention feature representation carrying precise partial common information. This module will discriminate different types and structures of partial common information by selectively assigning different attention weights. Thus, utilizing PCI-masks and the self-attention mechanism make our segmentation algorithm more capable of avoiding treating different modalities as equal contributors in training or over-aggressively maximizing the total correlation of feature representations.

Following the theoretical analysis, we propose a new semantic brain tumor segmentation algorithm leveraging PCI-masks. The proposed solution also applies to many image segmentation tasks involving multi-modal data. The backbone of this design is the vanilla multi-modal U-Net, with which we integrate two new modules, masked maximal correlation (MMC) and masked self-attention (MSA), representing the PCI-mask optimization and self-attention mechanism, respectively. Besides, we adopt the standard cross-entropy segmentation loss and newly derived masked maximal correlation loss in the proposed method, where the latter guides both the learning of feature representations and the optimization of PCI-masks.

Our proposed solution is evaluated on the public brain tumor dataset, Multi-modal Brain Tumor Segmentation Challenge (BraTS)~\cite{menze2014multimodal}, containing fully annotated multi-modal brain tumor MRI images. We validate the effectiveness of our method through comparisons with advanced brain tumor segmentation baselines and perform ablations regarding the contributions of designed modules. In experiments, our proposed method consistently indicates improved empirical performance over the state-of-the-art baselines, with validation Dice similarity coefficient of 0.920, 0.897, 0.837 for the whole tumor, tumor core, and enhancing tumor on BraTS-2020, respectively.

The main contributions of our work are as follows: 
\begin{itemize}
	\item We introduce the novel PCI-mask and its online optimization to identify partial common information microstructures in multi-modal data during learning.
	
	\item We propose a U-Net-based framework utilizing PCI-masks and the self-attention mechanism to exploit the partial common information thoroughly.
	
	\item Experimental results of our design demonstrate its effectiveness in handling brain tumor segmentation tasks outperforming state-of-the-art baselines.
	
\end{itemize}

\section{Related Works}

\subsection{Medical Image Segmentation Approaches}

Image processing and related topics has demonstrated the importance in multiple areas~\cite{zhang2023mm,wu2023serverless}. As a complicated task, automated medical image segmentation plays a vital role in disease diagnosis and treatment planning. In the early age, segmentation systems rely on traditional methods such as object edge detection filters and mathematical algorithms~\cite{muthukrishnan2011edge,kaganami2009region}, yet their heavy computational complexity hinders the development. Recently, deep-learning-based segmentation techniques achieved remarkable success regarding processing speed and accuracy and became popular for medical image segmentation.
FCN~\cite{long2015fully} utilized the full convolution to handle pixel-wise prediction, becoming a milestone of medical image segmentation. Later-proposed U-Net~\cite{ronneberger2015u} designed a symmetric encoder-decoder architecture with skip connections, where the encoding path captures context information and the decoding path ensures the accurate location. Due to the improved segmentation behavior of U-Net, numerous U-Net-based variances for brain tumor segmentation have been introduced, such as additional residual connections~\cite{he2016deep,jiang2019two,isensee2017brain}, densely connected layers~\cite{huang2017densely,mckinley2018ensembles}, and extension with an extra decoder~\cite{myronenko20183d}. Besides, as the imitation of human perception, the attention mechanism can highlight useful information while suppressing the redundant remains. As shown in many existing works~\cite{petit2021u,zhang2020attention,vaswani2017attention}, attention structures or embedded attention modules can also effectively improve brain tumor segmentation performance. In this paper, we use U-Net as the backbone integrated with newly designed modules to thoroughly leverage partial common information microstructure among modalities often overlooked in segmentation.

\subsection{HGR Correlation in Multi-Modal Learning}

The computation of the maximal correlation has been adopted in many multi-modal learning practices for feature extraction. As a generalization from Pearson's correlation~\cite{pearson1895vii}, the HGR maximal correlation is prevailing for its legitimacy as a measure of dependency. In the view of information theory, the HGR transformation carries the maximum amount of information of a specific modality to another and vice versa. For instance, \cite{feizi2017network} shows that maximizing the HGR maximal correlation allows determining the nonlinear transformation of two maximally correlated variables. Soft-HGR loss is introduced in~\cite{wang2019efficient} as the development of standard HGR maximal correlation to extract the most informative features from different modalities. These works and other variants~\cite{huang2019information,huang2020information,xu2020maximal} validate the effectiveness of maximal correlation methods in extracting features by conducting experiments on simple datasets, such as CIFAR-10~\cite{krizhevsky2009learning}. Additionally, \cite{ma2019end,zhang2018multimodal} adopt the Soft-HGR loss for the other multi-modal learning task, which is emotion recognition. In this work, we further develop the Soft-HGR technique to extract optimal partial informative feature representations through the PCI-mask and leverage them for brain tumor segmentation.

\section{Background}

\subsection{Brain Tumor Segmentation}

Brain tumors refer to the abnormal and uncontrolled multiplication of pathological cells growing in or around the human brain tissue. We can categorize brain tumors into primary and secondary tumor types~\cite{deangelis2001brain} based on their different origins. For primary ones, the abnormal growth of cells initiates inside the brain, whereas secondary tumors' cancerous cells metastasize into the brain from other body organs such as lungs and kidneys. 
The most common malignant primary brain tumors are gliomas, arising from brain glial cells, which can either be fast-growing high-grade gliomas (HGG) or slow-growing low-grade gliomas (LGG)~\cite{louis2016who}. Magnetic resonance imaging (MRI) is a standard noninvasive imaging technique that can display detailed images of the brain and soft tissue contrast without latent injury and skull artifacts, which is adopted in many different tasks~\cite{bian2021optimization,bian2022learnable}.
In usual practices, complimentary MRI modalities are available: T1-weighted (T1), T2-weighted (T2), T1-weighted with contrast agent (T1c), and fluid attenuation inversion recovery (FLAIR)~\cite{bauer2013survey}, which emphasize different tissue properties and areas of tumor spread. To support clinical application and scientific research, brain tumor segmentation over multi-modal MRI data has become an essential task in medical image processing~\cite{cui2018automatic}.

\subsection{HGR Maximal Correlation}

The HGR maximal correlation was originally defined on a single feature, while we can easily extend it to scenarios with multiple features involved. Considering a dataset with $ k $ modalities, we define the multi-model observations as $ k $-tuples, i.e., $ (X_1, \cdots, X_k) $. For the $ i $-th modality, we use a transformation function $ \bfxi=[f^{(i)}_{1}(X_i),\dots,f^{(i)}_{m}(X_i)]\T $ to compute its $ m $-dimensional feature representation. Based on given definitions, the HGR maximal correlation is defined as follows:
\begin{equation*}
	\rho(X_i,X_j) = \sup_{\substack{X_i,X_j \in \mathbb{R}^k \\ \mathbb{E}[\bfi]=\mathbb{E}[\bfj]=0 \\ \bsigma_{\bfi}=\bsigma_{\bfj}=\mathbf{I}}} \mathbb{E}[\bfi\T(X_i)\bfxj],
\end{equation*}
where $ i $ and $ j $ ranges from 1 to $ k $, and $ \bsigma $ denotes the covariance of the feature representation. The supremum is taken over all sets of Borel measurable functions with zero-mean and identity covariance. Since finding the HGR maximal correlation will lead us to locate the informative non-linear transformations of feature representations $ \bfi $ and $ \bfj $ from different modalities, it becomes useful to extract features with more common information from multi-modal data.

\subsection{Soft-HGR}

Compared to the traditional HGR maximal correlation method, Soft-HGR adopts a low-rank approximation, making it more suitable for high-dimensional data. Maximizing a Soft-HGR objective has been shown to extract hidden common information features among multiple modalities more efficiently~\cite{wang2019efficient}. The optimization problem to maximize the multi-modal Soft-HGR maximal correlation is described as follows:
\begin{equation}
	\begin{aligned}
		\max_{\bfone,\dots,\bfk} \quad &\sum_{\substack{i,j = 1 \\ i \neq j}}^{k} L(\bfxi,\bfxj) \\
		\textrm{s.t.} \quad &X_i,X_j \in \mathbb{R}^k, \mathbb{E}[\bfxi]=\mathbb{E}[\bfxj]=\mathbf{0},
	\end{aligned}
	\label{eq:soft_hgr}
\end{equation}
where, given $ i $ and $ j $ ranging from 1 to $ k $, feature representations $ \bfi $ and $ \bfj $ should satisfy zero-mean condition, and the function of the optimization objective in Equation~\eqref{eq:soft_hgr} is:
\begin{equation}
	L(\bfi,\bfj) \eqdef \mathbb{E}[\bfi\T(X_i)\bfxj] - \frac{1}{2} \tr(\bsigma_{\bfxi}\bsigma_{\bfxj}),
	\label{eq:soft_hgr_sub2}
\end{equation}
where $ \tr(\cdot) $ denotes the trace of its matrix argument and $ \bsigma $ is the covariance. We note that Equation~\eqref{eq:soft_hgr_sub2} contains two inner products: the first term is between feature representations representing the objective of the HGR maximal correlation; the second term is between their covariance, which is the proposed soft regularizer to replace the whitening constraints.
\section{Identifying Partial Common Information}

\subsection{Masked Correlation Maximization}

We first introduce a special mask to the standard Soft-HGR optimization problem. The introduced mask can selectively assign variable weights for feature representations and aims to identify the latent partial common information microstructure at specific dimensions of feature representations implied by higher mask weights. Such high weights add importance and compel the common information to concentrate on a subset of feature representations from different modalities when computing the maximal correlation. Thus, by applying the mask, we can differentiate critical partial common information from its trivial counterpart in feature representations effectively and precisely. 

Based on the function $L(\bfi,\bfj)$ in Equation~\eqref{eq:soft_hgr_sub2}, we apply a selective mask vector $\bm{s}$ to input feature representations $ \bfxi $ and $ \bfxj $ by computing their element-wise products. The vector $\bm{s}$ shares the same dimension $ m $ with feature representations, such that $ \bm{s}=[s_1,\cdots,s_m]\T $. We restrict the value of mask weights to $[0,1]$ with higher weights representing the more concentration to feature representations' dimensions with more latent partial common information. We also consider a constraint on the sum of mask weights, i.e., ${\mathbf 1}\T \bm{s}\leq c$ with a predefined constant $c>0$ in order to let the common information mask focus on at most $c$ dimensions with the most valuable common information in feature representations. The reformatted maximal correlation optimization problem in Equation~\eqref{eq:soft_hgr} with function $L(\bfi,\bfj)$ becomes:
\begin{equation}
	\max_{\bfone,\dots,\bfk} \sum_{\substack{i,j=1 \\ i \neq j}}^{k} \bar{L}(\bm{s}_{ij}\odot\bfi,\bm{s}_{ij}\odot\bfj),
	\label{eq:hscore_s}
\end{equation}
where $\odot$ denotes the element-wise product. We can notice that the weights of the selective mask vector are directly applied to the input feature representations. When solving the optimization problem in Equation~\eqref{eq:hscore_s}, this product will only emphasize the feature dimensions consisting of latent common information microstructure among feature representations from different modalities.

The selective mask vectors $\bm{s}_{ij}$ in Equation~\eqref{eq:hscore_s} need to be optimized to explicitly identify the microstructure between feature representations. However, it is inefficient to directly solve the optimization problem in Equation~\eqref{eq:hscore_s} with selective mask vectors. To address this issue, we consider an equivalent optimization with respect to a partial common information mask (PCI-mask) $ \blambda $ defined as follows:

\begin{definition}[Partial common information mask]
	\label{def：mask}
	We define PCI-mask as a $m\times m$ diagonal matrix $\mathrm{diag}(\lambda_{1},\dots, \lambda_{i})$ with diagonal values denoted by $ \lambda_{i} $, where $ i = 1,\dots,m $.
\end{definition}

After giving the definition of PCI-mask, we provide the masked maximal correlation for identifying optimal common information through a new optimization problem with necessary constraints.
\begin{theorem}[Masked maximal correlation]
	\label{theo:mask_corr}
	The optimization of maximal correlation with respect to selective mask vectors $\bm{s}$ in Equation~\eqref{eq:hscore_s} is equivalent to the following optimization over PCI-mask $\blambda$ with zero-mean feature representation $\bfxi$ of $k$ modalities:
	\begin{subequations}
		\begin{equation}
			\max_{\bfone,\dots,\bfk} \sum_{\substack{i,j=1 \\ i \neq j}}^{k} \tilde{L}(\bfi,\bfj,\blambda_{ij}),
			\label{eq:mhscore_A}
		\end{equation}
		where the function $\tilde{L}(\bfi,\bfj,\blambda_{ij})$ is given by:
		\begin{equation}
			\tilde{L}(\bfi,\bfj,\blambda_{ij}) \eqdef \mathbb{E}\left[{\bfi}\T(X_i)\blambda_{ij}\bfxj\right] -\frac{1}{2}\tr\left(\bsigma_{\bfxi}\blambda_{ij}\bsigma_{\bfxj}\blambda_{ij}\right),
			\label{eq:hscore_A}
		\end{equation}
	\end{subequations}
	and the PCI-mask $\blambda$ satisfies the following conditions:
	\begin{itemize}
		\item[1)] Range constraint: The diagonal values of $\blambda$ falls in  $0\leq\lambda_{i}\leq1$;
		\item[2)] Sum constraint: The sum of diagonal values are bounded: $\sum_{i=1}^{m}\lambda_{i}\leq c$.
	\end{itemize}
\end{theorem}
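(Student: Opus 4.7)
The plan is to reparameterize the selective mask vector as $\blambda_{ij} = S_{ij}^{2}$, where $S_{ij}\eqdef\mathrm{diag}(\bm{s}_{ij})$, and then show term-by-term that $\bar{L}(\bm{s}_{ij}\odot\bfi,\bm{s}_{ij}\odot\bfj)$ equals $\tilde{L}(\bfi,\bfj,\blambda_{ij})$. Writing the element-wise product as a matrix-vector product, $\bm{s}_{ij}\odot\bfi = S_{ij}\bfi$, exposes both summands of Equation~\eqref{eq:soft_hgr_sub2} to standard covariance and trace identities, after which the diagonal (hence symmetric) structure of $S_{ij}$ carries the rest of the reduction.

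Concretely, I would proceed in three small steps. First, I would rewrite the inner-product term as $\mathbb{E}[(S_{ij}\bfi)\T(S_{ij}\bfj)]=\mathbb{E}[\bfi\T S_{ij}^{2}\bfj]$, using $S_{ij}\T=S_{ij}$. Second, I would invoke the linear-map covariance rule $\bsigma_{S_{ij}\bfi}=S_{ij}\bsigma_{\bfi}S_{ij}$ together with the cyclic invariance of the trace to collapse the regularizer: $\tr(\bsigma_{S_{ij}\bfi}\bsigma_{S_{ij}\bfj}) = \tr(S_{ij}\bsigma_{\bfi}S_{ij}S_{ij}\bsigma_{\bfj}S_{ij}) = \tr(\bsigma_{\bfi}S_{ij}^{2}\bsigma_{\bfj}S_{ij}^{2})$. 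Third, substituting $\blambda_{ij}\eqdef S_{ij}^{2}$ identifies both terms exactly with Equation~\eqref{eq:hscore_A}, and since $S_{ij}$ is diagonal, so is $\blambda_{ij}$, matching the PCI-mask definition.

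It remains to verify that the constraints transfer. The range $s_{\ell}\in[0,1]$ gives $\lambda_{\ell}=s_{\ell}^{2}\in[0,1]$ directly; and on $[0,1]$ we have $s_{\ell}^{2}\le s_{\ell}$, so $\sum_{\ell}\lambda_{\ell}\le\sum_{\ell}s_{\ell}\le c$. Hence every feasible $(\bfone,\ldots,\bfk,\bm{s}_{ij})$ of Equation~\eqref{eq:hscore_s} yields a feasible $(\bfone,\ldots,\bfk,\blambda_{ij})$ of Equation~\eqref{eq:mhscore_A} with identical objective value. The main subtlety I anticipate is the reverse direction: given feasible $\blambda_{ij}$, the natural inverse $s_{\ell}=\sqrt{\lambda_{\ell}}$ preserves both the objective and the range constraint, but $\sum_{\ell}\sqrt{\lambda_{\ell}}$ need not be bounded by $c$. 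I plan to resolve this by framing the PCI-mask formulation as the natural reparameterization that carries the same objective form and the same dimension-selection semantics, with "equivalent" read as sharing the optimizing feature representations and the inherited box-plus-sum constraints on the squared mask rather than as a strict bijection of feasible sets.
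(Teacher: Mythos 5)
Your proposal follows essentially the same route as the paper's proof: write $\bm{s}_{ij}\odot\bfi$ as a diagonal-matrix product, use symmetry of the diagonal matrix and cyclicity of the trace to pull the two mask factors together in each term, and reparameterize via $\blambda_{ij}=D_{\bm{s}_{ij}}^{2}$ (the paper also uses the zero-mean assumption to kill the cross-term of means, which you implicitly absorb into the covariance identity). Your constraint-transfer check is in fact more careful than the paper's, which simply asserts that the constraints on $\bm{s}$ remain applicable to $\blambda$ without addressing the non-bijectivity of the squaring map that you correctly flag.
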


\begin{proof}
	See Appendix~\ref{pf:theorem}.
\end{proof}

The PCI-mask in Equation~\eqref{eq:hscore_A} captures the precise location of partial common information between feature representations of different modalities and allows efficient maximal correlation calculation in Equation~\eqref{eq:mhscore_A}. However, as learned feature representations will vary during the training process, a static PCI-mask will be insufficient for obtaining the latent microstructure. Therefore, to synchronize with learned feature representations, we optimize the PCI-mask in an unsupervised manner for each learning step.

\subsection{Learning Microstructure via PCI-mask Update}

To optimize PCI-mask under two constraints mentioned in Theorem~\ref{theo:mask_corr}, we adopt the projected gradient descent (PGD) method. PGD is a standard approach to solve the constrained optimization problem, allowing updating the PCI-mask in an unsupervised and online fashion during the learning process.

Optimizing the PCI-mask with PGD requires two key steps: (1) selecting an initial starting point within the constraint set and (2) iteratively updating the gradient and projecting it on to the feasibility set. In accordance to both range and sum constraints in Theorem~\ref{theo:mask_corr}, we define a feasibility set as $\mathcal{Q}=\{\blambda|\blambda_{i,i}\in[0,1]\ \forall i,\ \mathbf{1}\T\blambda\mathbf{1} \leq c\}$. Then, we iteratively compute the gradient descent from an initial PCI-mask $\blambda_0$ ($n=0$) and project the updated PCI-mask on to $\mathcal{Q}$:
\begin{equation}
	\blambda_{n+1}= P_{\mathcal{Q}}(\blambda_n-\alpha_n\frac{\p\tilde{L}}{\p\blambda_n}),
	\label{eq:pgd}
\end{equation}
where $n$ denotes the current step, $P_{\mathcal{Q}}(\cdot)$ represents the projection operator, and $\alpha_n\geq0$ is the step size. 

We first introduce the following lemma as the key ingredient for deriving the gradient descent.
\begin{lemma}[Mask gradient]
	\label{lem:deriv}
	For $k$ modalities, the gradient with respect to PCI-mask is given by:
	\begin{equation*}
		\frac{\p \tilde{L}}{\p\blambda}=\sum_{\substack{i,j=1 \\ i \neq j}}^{k}\frac{\p \tilde{L}(\bfi,\bfj,\blambda_{ij})}{\p\blambda_{ij}},
	\end{equation*}
	where the partial derivation with respect to $\blambda_{ij}$ is:
	\begin{equation*}
		\resizebox{1.025\linewidth}{!}{$
			\begin{aligned}
				&\frac{\p \tilde{L}(\bfi,\bfj,\blambda_{ij})}{\p \blambda_{ij}}=\mathbb{E}\left[\bfxj\bfi\T(X_i)\right] - \frac{1}{2}\left[\left(\bsigma_{\bfxi}\blambda_{ij}\bsigma_{\bfxj}\right)\T+\left(\bsigma_{\bfxj}\blambda_{ij}\bsigma_{\bfxi}\right)\T\right].
			\end{aligned}
			$}
		\label{eq:hscore_A_deriv}
	\end{equation*}
\end{lemma}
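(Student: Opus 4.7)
The plan is to prove the lemma by standard matrix calculus applied to the two pieces of $\tilde{L}(\bfi,\bfj,\blambda_{ij})$ in Equation~\eqref{eq:hscore_A}, then to reassemble using the structure of the outer sum. The first observation is that each mask $\blambda_{ij}$ appears in exactly one summand of the double sum in Equation~\eqref{eq:mhscore_A} (the one indexed by the ordered pair $(i,j)$ with $i\neq j$), so by linearity of the derivative the full gradient $\p\tilde{L}/\p\blambda$ immediately decomposes as $\sum_{i\neq j}\p\tilde{L}(\bfi,\bfj,\blambda_{ij})/\p\blambda_{ij}$. The only remaining task is therefore to differentiate one summand with respect to its associated mask.

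For the bilinear expectation term $\mathbb{E}[\bfi\T(X_i)\blambda_{ij}\bfxj]$, I would expand the integrand entrywise as $\sum_{p,q}(\bfi)_p(\blambda_{ij})_{pq}(\bfj)_q$, differentiate in $(\blambda_{ij})_{pq}$, and reassemble the partials into the matrix $\mathbb{E}[\bfxj\bfi\T(X_i)]$; interchanging expectation and derivative is justified because the integrand is linear in $\blambda_{ij}$ and the feature moments are finite by the zero-mean, finite-covariance assumption in Theorem~\ref{theo:mask_corr}. For the trace term $-\tfrac12\tr(\bsigma_{\bfxi}\blambda_{ij}\bsigma_{\bfxj}\blambda_{ij})$ I would invoke the standard identity $\p\tr(AXBX)/\p X = A\T X\T B\T + B\T X\T A\T$ with $A=\bsigma_{\bfxi}$, $B=\bsigma_{\bfxj}$, $X=\blambda_{ij}$; the two resulting products are precisely $(\bsigma_{\bfxj}\blambda_{ij}\bsigma_{\bfxi})\T$ and $(\bsigma_{\bfxi}\blambda_{ij}\bsigma_{\bfxj})\T$. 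Combining these with the $-\tfrac12$ prefactor and the first-term contribution, then restoring the outer sum over $(i,j)$, reproduces the closed form in the statement.

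The main obstacle I anticipate is notational rather than analytic: the lemma is written in numerator layout (visible from the appearance of $\bfxj\bfi\T(X_i)$ rather than $\bfi(X_i)\bfxj\T$), and the two transposed products in the second term look redundant because $\bsigma_{\bfxi}$, $\bsigma_{\bfxj}$, and $\blambda_{ij}$ are all symmetric. I would address this by declaring the layout convention once at the top of the proof, keeping the explicit transposes so that the matrix-calculus chain rule can be traced step by step, and finally noting that symmetry makes the displayed formula insensitive to the convention choice, so the projected-gradient update in Equation~\eqref{eq:pgd} is unambiguous.
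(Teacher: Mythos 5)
Your proposal is correct and follows essentially the same route as the paper's proof: the paper also splits the gradient across the summands by linearity, obtains $\mathbb{E}\left[\bfxj\bfi\T(X_i)\right]$ for the bilinear term, and derives the two transposed products for the trace term via the total-differential identity $\dd f=\tr\left((\p f/\p X)\T\dd X\right)$ — the same computation you invoke as the standard identity $\p\tr(AXBX)/\p X = A\T X\T B\T + B\T X\T A\T$. Your added remarks on the layout convention and on the symmetry of $\bsigma_{\bfxi}$, $\bsigma_{\bfxj}$, and the diagonal $\blambda_{ij}$ are accurate but not needed for the result.
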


\begin{proof}
	See Appendix~\ref{pf:lemma}.
\end{proof}

Lemma~\ref{lem:deriv} provides the computational result for Equation~\eqref{eq:pgd}. Since the PCI-mask is updated in a unsupervised manner, we will terminate the gradient descent process when confirming the satisfaction of a stopping condition, such as the difference between the current gradient and a predefined threshold being smaller than a tolerable error. Besides, as shown in Equation~\eqref{eq:pgd}, we apply the projection to descended gradient, and this projection is also an optimization problem. More specifically, given a point $\bar{\blambda}=\blambda_n-\alpha_n(\p\tilde{L}/\p\blambda_n)$, $P_{\mathcal{Q}}$ will find another feasible point $\blambda_{n+1}\in\mathcal{Q}$ with the minimum Euclidean distance to $\bar{\blambda}$, which is:
\begin{equation}
	P_{\mathcal{Q}}(\bar{\blambda})=\arg\min_{\blambda_{n+1}}\frac{1}{2}\Vert\blambda_{n+1}-\bar{\blambda}\Vert_2^2.
	\label{eq:proj}
\end{equation}

Equation~\eqref{eq:proj} indicates the projection mechanism by selecting a valid candidate with the shortest distance to the current point at step $ n $ within the defined feasibility set. Combining this procedure with the gradient descent, the constraints in Theorem~\ref{theo:mask_corr} will always hold for the updated PCI-mask during unsupervised optimization. Therefore, the partial common information microstructure can be effectively identified from feature representations through the optimized PCI-mask, in which the weights will be increased for dimensions exhibiting higher partial common information.

\section{System Design}

\begin{figure*}[t]
	\centering
	\includegraphics[width=0.85\textwidth]{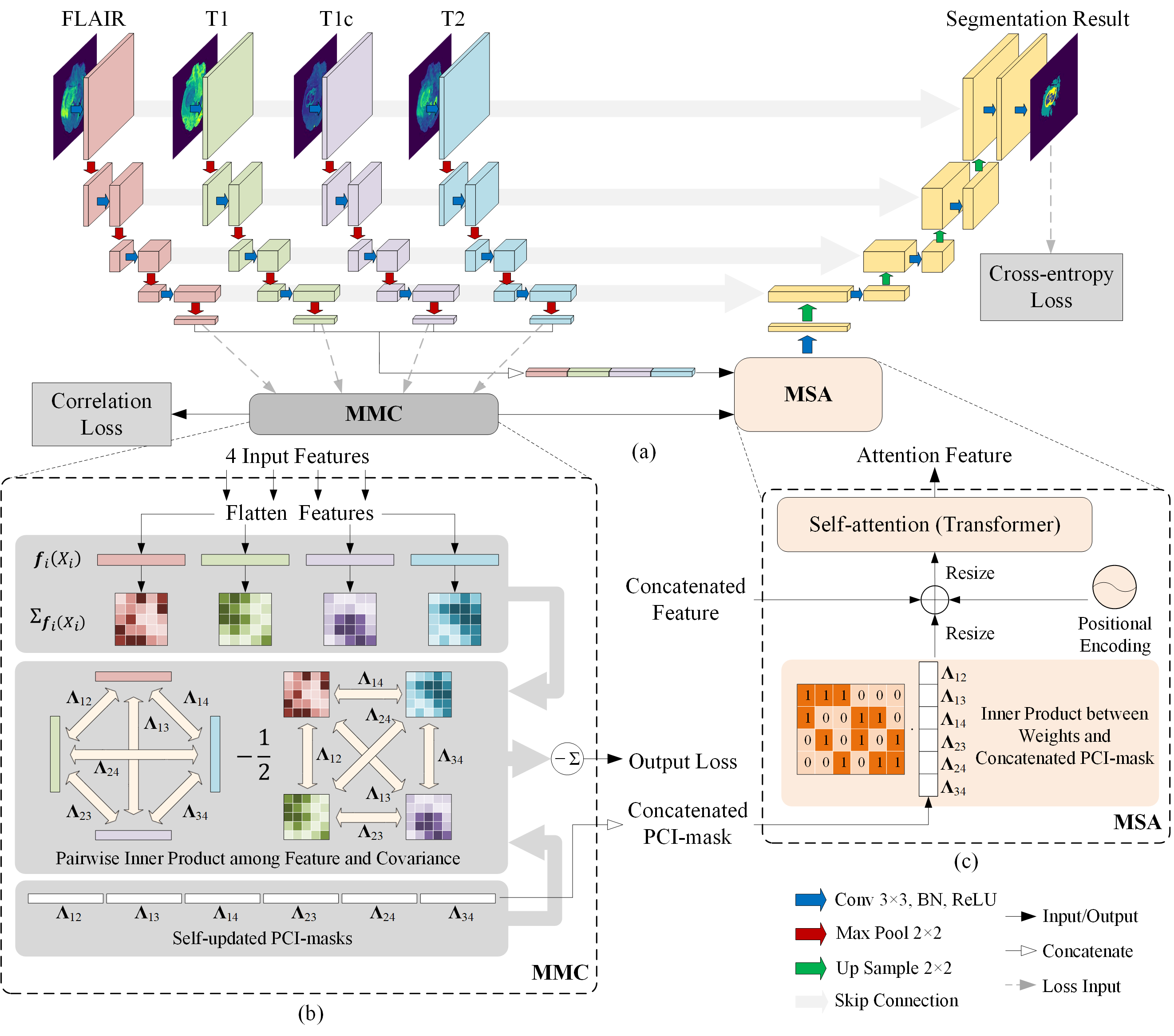}
	\caption{The architecture overview (a) of the designed system. Beyond the U-Net backbone, the system contains two newly designed modules: Masked Maximal Correlation (MMC) module (b) and Masked Self-Attention (MSA) module (c). We use the total loss consisting of weighted masked maximum correlation loss and cross-entropy segmentation loss to train the model.}
	\label{fig:sys}
\end{figure*}

\subsection{Model Learning}

The multi-modal image segmentation task requires well-learned feature representations and common information microstructure to improve performance. Therefore, we consider the segmentation and partial common information microstructure exploitation simultaneously by defining the total loss function $ \mathcal{L}_{tot} $ of our model as follows:
\begin{equation}
	\mathcal{L}_{tot}=\theta\mathcal{L}_{corr}+\mathcal{L}_{ce},
	\label{eq:loss_total}
\end{equation}
where $ \mathcal{L}_{corr} $ is the masked maximal correlation loss, and $ \mathcal{L}_{ce} $ denotes the standard cross-entropy segmentation loss. The parameter $\theta$ is the weighting factor for correlation loss to keep both loss functions proportionally in a similar scale.

Based on Theorem~\ref{theo:mask_corr}, we define the masked maximal correlation loss as the negative of the function in Equation~\eqref{eq:hscore_A}, such that $\mathcal{L}_{corr}=-\tilde{L}$. It changes the maximizing correlation problem to minimizing the correlation loss, and both are equivalent regarding using the partial common information from multi-modal data. We provide the procedure of masked maximal correlation loss computation in Appendix~\ref{pseudo:mmc}. 
Besides, Algorithm~\ref{alg:pgd} summarizes the detailed procedure of realizing unsupervised optimization of PCI-mask in Equation~\eqref{eq:pgd}, where we design a truncation function to project all the values of the PCI-mask into the space $[0,1]$ to satisfy the range constraint. Furthermore, we leverage the bisection search to adjust element values in PCI-mask to guarantee that their sum remains no more than a predefined threshold during optimization as described by sum constraint.

\begin{algorithm}[tb]
	\caption{Unsupervised optimization of PCI-mask using PGD}
	\label{alg:pgd}
	\textbf{Input}: Correlation loss $\mathcal{L}_{corr}$, PCI-mask $\blambda$\\
	\textbf{Parameter}: Size of PCI-mask $m$, Step size $\alpha$, sum threshold $c$, lower and upper guesses $b_1, b_2$, tolerable error $e$\\
	\textbf{Output}: Updated PCI-mask $\blambda'$
	\begin{algorithmic}[1]
		\STATE Compute the gradient descent: \\
		$\tilde{\blambda}\leftarrow\blambda-\alpha(\partial\mathcal{L}_{corr}/\partial\blambda)$
		\STATE Computing truncated PCI-mask: \\
		$\bar{\blambda}\leftarrow \mathrm{truncate}(\tilde{\blambda})$ \\
		\STATE Comparing the sum with predefined threshold:
		\IF{$\sum_{i=1}^{n}\bar{\lambda}_i>c$}
		\STATE Using bisection search: adjust $\bar{\lambda}_i$ value in $\bar{\blambda}$
		\WHILE{$\lvert\sum_{i=1}^{n}\bar{\lambda}_i-c\rvert>e$}
		\STATE $r\leftarrow(b_1+b_2)/2$ \\
		\FOR{$i=1:m$}
		\STATE $\bar{\lambda}_i\leftarrow \mathrm{truncate}(\bar{\lambda}_i-r)$
		\ENDFOR
		\IF{$\sum_{i=1}^{n}\bar{\lambda}_i>c$}
		\STATE $b_1\leftarrow r$
		\ELSE
		\STATE $b_2\leftarrow r$
		\ENDIF
		\ENDWHILE
		\ENDIF
		\STATE \textbf{return} $\blambda'\leftarrow\bar{\blambda}$
	\end{algorithmic}
	{\em Routine $\mathrm{truncate}(\cdot)$.} See Appendix~\ref{pseudo:routine}.
\end{algorithm}

For the segmentation loss in Equation~\eqref{eq:loss_total}, we adopt the standard cross-entropy loss to guide the learning process. Finally, the weighted summation of two losses will participate in the backward propagation of the network.

\subsection{Model Design}

Figure~\ref{fig:sys} shows the whole system architecture of our design. Our model adopts the vanilla multi-modal U-Net as the main backbone, including encoding and decoding paths. The encoding path learns high-level feature representations from input data, while the decoding path up-samples the feature representations to generate pixel-wise segmentation results. Furthermore, the model concatenates the feature representations from the encoding to the decoding path by leveraging the skip connection to retain more information. 
In Figure~\ref{fig:sys} (a), each convolution block contains a 3 $ \times $ 3 convolution layer, followed by a batch normalization layer and a ReLU. Besides, the 2 $ \times $ 2 max-pooling layer is adopted to downsample the feature representations. We calculate the masked maximal correlation loss by using the high-level feature representation at the end of each encoding path and compute the cross-entropy segmentation loss at the end of the decoding path.

To explicitly identify and exploit partial common information during learning, we design two independent modules in Figure~\ref{fig:sys} (a) to process learned high-level feature representations: the masked maximal correlation (MMC) and masked self-attention (MSA) modules. The details of the MMC module are shown in Figure~\ref{fig:sys} (b). We first calculate the covariance matrices based on flattened input feature representations. Then, we use the PCI-masks to compute the inner product among feature representations and covariance matrices, thereby getting the masked maximal correlation loss. The PCI-masks can reflect latent partial common information microstructures, as illustrated by the visualization in Figure~\ref{fig:gray} where we show the PCI-masks in gray-scale heat maps. We can observe similar partial common information patterns (represented by dark areas) among different modalities that facilitate the segmentation. Meanwhile, the concatenation of all PCI-masks will be cached and passed into the next module.

We also apply the self-attention mechanism mentioned in~\cite{vaswani2017attention,oktay2018attention,petit2021u} to the MSA module placed between the encoding and decoding paths given in Figure~\ref{fig:sys} (c). We feed the MSA module with the partial common information microstructure stored in the PCI-mask and concatenated feature representation to predict the segmentation results accurately. As shown in Figure~\ref{fig:sys} (c), we apply an additional 4 $ \times $ 6 attention weight matrix to the concatenated PCI-mask. This weight matrix is designed to search for the best combination of all PCI-masks that can select the common information most relevant to one modality. After that, the chosen combination will become one of the inputs of the self-attention module core. We will extract an attention feature representation as the final output of the MSA module.
\section{Experiments}

\begin{figure}[t]
	\centering
	\includegraphics[width=0.7\textwidth]{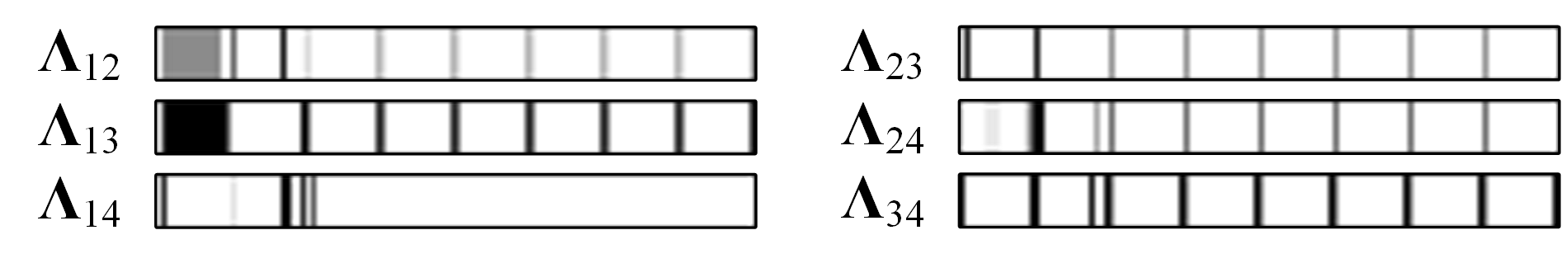}
	\caption{Visualization of PCI-masks in gray-scale heat maps, where dark areas highlight the partial common information microstructure. Subscriptions of PCI-mask $ \blambda $ ranging from 1 to 4 denote modalities FLAIR, T1, T1c, and T2, respectively.}
	\label{fig:gray}
\end{figure}

In this section, we provided experimental results on BraTS-2020 datasets by comparing our model with state-of-the-art baselines, reporting in several metrics. We also visualized and analyze the PCI-masks and the segmentation results of our design. Additionally, we investigated the impact of the weighting factor and used several ablations to discuss the contribution of MMC and MSA modules shown in Figure~\ref{fig:sys}. More results using an older BraTS dataset and implementation details are provided in Appendix~\ref{append:supp}. Code has been made available at: \url{https://github.com/ysmei97/multimodal_pci_mask}.

\subsection{Datasets}

The BraTS-2020 training dataset consists of 369 multi-contrast MRI scans, where 293 have been acquired from HGG and 76 from LGG. All the multi-modality scans contain four modalities: FLAIR, T1, T1c, and T2. Each of these modalities captures different brain tumor sub-regions, including the necrotic and non-enhancing tumor core (NCR/NET) with label 1, peritumoral edema (ED) with label 2, and GD-enhancing tumor (ET) with label 4. 

\subsection{Data Preprocessing and Environmental Setup}

The data are the 3D MRI images with the size of 155 $ \times $ 240 $ \times $ 240. Due to their large size, we utilize slice-wise 2D segmentation to 3D biomedical data~\cite{chen2016dcan}. Therefore, all input MRI images are divided into 115 slices with the size of 240 $ \times $ 240, which will be further normalized between 0 and 1. Then, we feed the processed images into our model and start training.

We adopt the grid search to determine the weighting factor in Equation~\eqref{eq:loss_total}. The optimal value of $ \theta $ is 0.003. We set the learning rate of the model to 0.0001 and the batch size to 32. The PCI-masks are randomly initialized. When optimizing the PCI-mask, step size $\alpha$ is set to 2, and tolerable error $e$ is set to 0.01 of the sum threshold. We enable the Adam optimizer~\cite{kingma2014adam} to train the model and set the maximum number of training epochs as 200. The designed framework is trained in an end-to-end manner.

\subsection{Evaluation Metrics}

We report our evaluation results in four metrics: Dice similarity coefficient (DSC), Sensitivity, Specificity, and positive predicted value (PPV). DSC measures volumetric overlap between segmentation results and annotations. Sensitivity and Specificity determine potential over/under-segmentations, where Sensitivity shows the percentage of correctly identified positive instances out of ground truth, while Specificity computes the proportion of correctly identified actual negatives. Besides, PPV calculates the probability of true positive instances out of positive results. 

\begin{table}[t]
	\caption{Segmentation result comparisons between our framework and other baselines on the best single model.}
	\centering
	\resizebox{1.0\textwidth}{!}{
		\begin{tabular}{lC{0.9cm}C{0.9cm}C{1.0cm}C{0.9cm}C{0.9cm}C{1.0cm}C{0.9cm}C{0.9cm}C{1.0cm}C{0.9cm}C{0.9cm}C{0.9cm}}
			\toprule
			\multirow{2}{*}{Baselines} & \multicolumn{3}{c}{DSC} & \multicolumn{3}{c}{Sensitivity} & \multicolumn{3}{c}{Specificity} & \multicolumn{3}{c}{PPV} \\
			& ET & WT & TC & ET & WT & TC & ET & WT & TC & ET & WT & TC \\
			\midrule
			Vanilla U-Net & 0.822 & 0.883 & 0.867 & 0.816 & 0.880 & 0.831 & 0.998 & 0.998 & 0.999 & 0.856 & 0.909 & 0.842 \\
			Modality-Pairing Net & 0.833 & 0.912 & 0.869 & \textbf{0.872} & 0.895 & 0.866 & \textbf{1.000} & \textbf{0.999} & 0.999 & 0.871 & 0.934 & 0.892 \\
			nnU-Net & 0.818 & 0.911 & 0.871 & 0.843 & 0.864 & 0.853 & 0.999 & 0.998 & \textbf{1.000} & 0.885 & 0.942 & 0.893 \\
			CI-Autoencoder & 0.774 & 0.871 & 0.840 & 0.780 & 0.844 & 0.792 & 0.998 & \textbf{0.999} & 0.999 & 0.892 & 0.921 & 0.885 \\
			U-Net Transformer & 0.807 & 0.899 & 0.873 & 0.765 & 0.861 & 0.815 & 0.999 & \textbf{0.999} & \textbf{1.000} & 0.900 & 0.934 & 0.895 \\
			Ours & \textbf{0.837} & \textbf{0.920} & \textbf{0.897} & 0.861 & \textbf{0.898} & \textbf{0.877} & \textbf{1.000} & \textbf{0.999} & \textbf{1.000} & \textbf{0.908} & \textbf{0.952} & \textbf{0.898} \\
			\bottomrule
		\end{tabular}
	}
	\label{tab:baseline}
\end{table}

\begin{figure}[t]
	\centering
	\includegraphics[width=0.7\textwidth]{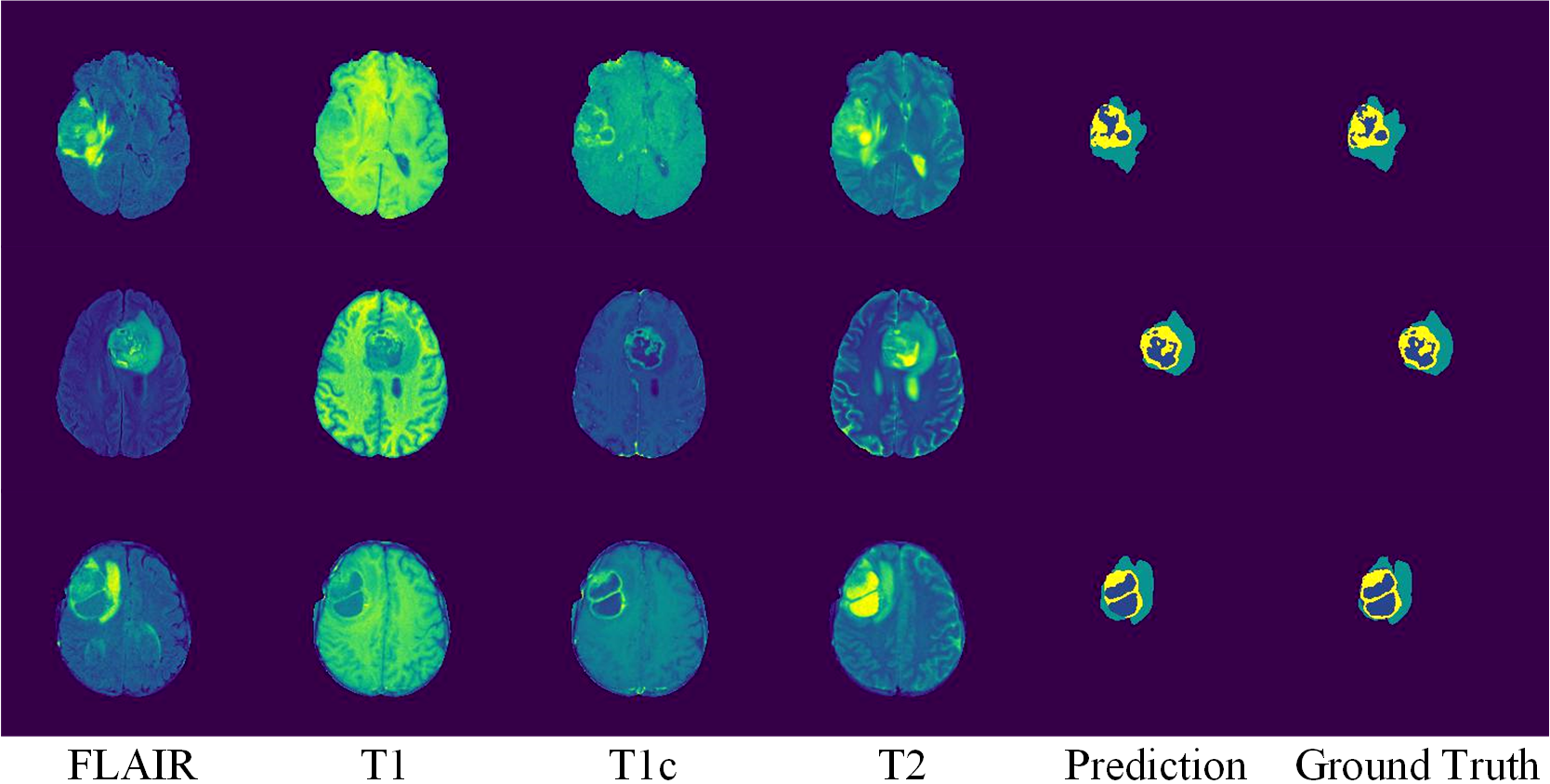}
	\caption{Visualization of segmentation results. From left to right, we show axial slice of MRI images in four modalities, predicted segmentation, and ground truth. Labels include ED (cyan), ET (yellow), and NCR/NET (blue) for prediction and ground truth.}
	\label{fig:result}
\end{figure}

\begin{table}[t]
	\caption{Searching the optimal weighting factor reporting DSC and sensitivity on whole tumor.}
	\centering
	\begin{tabular}{lC{1.8cm}C{1.8cm}C{1.8cm}C{1.8cm}C{1.8cm}}
		\toprule
		Weight $ \theta $ & 0.005 & 0.004 & \textbf{0.003} & 0.002 & 0.001 \\ 
		\midrule
		DSC & 0.878 & 0.898 & \textbf{0.920} & 0.881 & 0.879 \\ 
		Sensitivity & 0.854 & 0.886 & \textbf{0.898} & 0.883 & 0.864 \\ 
		\bottomrule
	\end{tabular}
	\label{tab:loss_weight}
\end{table}

\subsection{Main Results}

Since BraTS evaluates segmentation using the partially overlapping whole tumor (WT), tumor core (TC), and enhancing tumor (ET) regions~\cite{menze2014multimodal}, optimizing these regions instead of the provided labels is beneficial for performance~\cite{jiang2019two,wang2017automatic}. We train and validate our framework using five-fold cross-validation in a random split fashion on the training set. Then, we compare the results with original or reproduced results of advanced baselines, including vanilla U-Net~\cite{ronneberger2015u}, Modality-Pairing Network~\cite{wang2020modality}, nnU-Net~\cite{isensee2019nnu}, and U-Net Transformer~\cite{petit2021u}. 
The Modality-Pairing Network adopts a series of layer connections to capture complex relationships among modalities. Besides, nnU-Net is a robust and self-adapting extension to vanilla U-Net, setting many new state-of-the-art results. U-Net Transformer uses the U-Net with self-attention and cross-attention mechanisms embedded. 
Additionally, to demonstrate the effectiveness of optimized masked maximal correlation, we adapt the common information autoencoder (CI-Autoencoder) from~\cite{ma2019end} to our experimental setting and compute two pairwise correlations. Specifically, we create two static PCI-masks initialized as identity matrices and assign them to modality pairs FLAIR, T2, and T1, T1c.

We report the comparison results with other baselines in Table~\ref{tab:baseline}, where our proposed model achieves the best result. For instance, regarding DSC on WT, our method outperforms the vanilla U-Net by 3.7\%. Also, our proposed method achieves higher scores concerning tumor regions of most other metrics. The results indicate that the exploitation of partial common information microstructure among modalities via PCI-masks can effectively improve segmentation performance. Moreover, we provide examples of segmentation results of our proposed design in Figure~\ref{fig:result}. As can be seen, the segmentation results are sensibly identical to ground truth with accurate boundaries and some minor tumor areas identified.

As one of our main contributions, we visualize the PCI-masks to demonstrate captured partial common information microstructure and amount of partial common information varying between different feature representations and modalities. Due to the large dimension of the PCI-mask, we provide the first 128 diagonal element values of each PCI-mask in gray-scale heat maps in Figure~\ref{fig:gray}. The darker region represents higher weights, i.e., places with more partial common information. Given $ \blambda_{14} $ and $ \blambda_{34} $ in the figure, although we usually employ modalities FLAIR and T2 to extract features of the whole tumor, modality T1c still shares the microstructure with T2 that can assist identifying the whole tumor, told from their similar heat map patterns.

To investigate the impact of the weighting factor on the performance, we use grid search to search the optimal weight $ \theta $ in Equation~\eqref{eq:loss_total}. We show the results in Table~\ref{tab:loss_weight} on BraTS-2020, where the best practice is $ \theta= $ 0.003. Since the value of correlation loss is much larger than the cross-entropy loss, we need to project both loss functions onto a similar scale to allow them to guide the learning process collaboratively. In the table, we can notice an apparent trend indicating a local optimum.

\begin{table}[t]
	\caption{Ablations reporting DSC and sensitivity on whole tumor.}
	\centering
	\begin{tabular}{L{3cm}C{2.2cm}C{2.2cm}}
		\toprule
		Ablation & DSC & Sensitivity \\ 
		\midrule
		1: Soft-HGR & 0.882 & 0.871 \\ 
		2: MMC & 0.909 & 0.880 \\ 
		3: MSA & 0.899 & 0.861 \\ 
		4: \textbf{MMC+MSA} & \textbf{0.920} & \textbf{0.898} \\
		\bottomrule
	\end{tabular}
	\label{tab:ablation}
\end{table}

\subsection{Ablation Experiments}

We run several ablations to analyze our design. Results are shown in Table \ref{tab:ablation}, where experiment 4 is our best practice.

{\bf Static PCI-mask vs. optimized PCI-mask}: The first ablation computes maximal correlations over all modalities, which is equivalent to assigning multiple static PCI-masks of identity matrices for every two modalities. Results in Table \ref{tab:ablation} show that the model benefits from self-optimized PCI-masks when comparing experiments 1 and 2 or experiments 3 and 4.

{\bf Self-attention}: Comparing experiments 2 and 4 in Table \ref{tab:ablation}, adding the MSA module improves the DSC by 1.1\% and sensitivity score by 1.8\%. This comparison demonstrates that applying the self-attention module to concentrate on the extracted common information allows better learning.

\section{Conclusion}

This paper proposes a novel method to exploit the partial common information microstructure for brain tumor segmentation. By solving a masked correlation maximization and simultaneously learning an optimal PCI-mask, we can identify and utilize the latent microstructure to selectively weight feature representations of different modalities. Our experimental results on BraTS-2020 show the validation DSC of 0.920, 0.897, 0.837 for the whole tumor, tumor core, and enhancing tumor, demonstrating superior segmentation performance over other baselines. We will extend the proposed method to more implementations in the future.


%
%
%

\bibliographystyle{splncs04}
\bibliography{reference}

\newpage
\appendix
\onecolumn

\section{Proof of Theorem \ref{theo:mask_corr}}
\label{pf:theorem}

To begin with, we rewrite the covariance $\bsigma_{\bfxi}$ and $\bsigma_{\bfxj}$ by leveraging expectations of feature representations to get the unbiased estimators of the covariance matrices. The unbiased estimators of the covariance matrices are as follows:
\begin{equation*}
	\begin{split}
		\bsigma_{\bfxi}=\mathbb{E}\left[\bfxi{\bfi}\T(X_i)\right], \\
		\bsigma_{\bfxj}=\mathbb{E}\left[\bfxj{\bfj}\T(X_j)\right].
	\end{split}
\end{equation*}

Based on optimization problem \eqref{eq:hscore_s}, we apply the selective mask vector $\bm{s}$ to input feature representations by leveraging the element-wise product. Per property that The element-wise product of two vectors is the same as the matrix multiplication of one vector by the corresponding diagonal matrix of the other vector, we have:
\begin{equation*}
    \bm{s}\odot\bm{f}=D_{\bm{s}}\bm{f},
\end{equation*}
where $D_{\bm{s}}$ represents the diagonal matrix with the same diagonal elements as the vector $\bm{s}$.

The transpose of the diagonal matrix equals to itself. Therefore, the function $\bar{L}$ in \eqref{eq:hscore_s} is now given by:
\begin{subequations}
	\begin{align}
		&\bar{L}(\bm{s}\odot\bfi,\bm{s}\odot\bfj) \nonumber \\
		&=\mathbb{E}\left[{\bfi}\T(X_i)D_{\bm{s}}D_{\bm{s}}\bfxj\right] \label{eq:sub_L_s_1}\\
		&\hphantom{{}=}+\left(\mathbb{E}\left[D_{\bm{s}}\bfxi\right]\right)\T\mathbb{E}\left[D_{\bm{s}}\bfxj\right] \label{eq:sub_L_s_2}\\
		&\hphantom{{}=}-\frac{1}{2}\tr\left\{\mathbb{E}\left[D_{\bm{s}}\bfxi{\bfi}\T(X_i)D_{\bm{s}}\right]\mathbb{E}\left[D_{\bm{s}}\bfxj{\bfj}\T(X_j)D_{\bm{s}}\right]\right\}.\label{eq:sub_L_s_3}
	\end{align}
	\label{eq:L_s}
\end{subequations}

Considering that the input in Equation~\eqref{eq:L_s} subjects to zero-mean: $\mathbb{E}[\bfxi]=\textbf{0}$ for $i=1,2,\dots,k$, the term \eqref{eq:sub_L_s_2} becomes:
\begin{equation*}
    \left(\mathbb{E}\left[D_{\bm{s}}\bfxi\right]\right)\T\mathbb{E}\left[D_{\bm{s}}\bfxj\right]=0.
	\label{eq:L2}
\end{equation*}

Thus, \eqref{eq:sub_L_s_2} can be omitted as it equals to $0$. Using the property of matrix trace, the third term \eqref{eq:sub_L_s_3} can be turned into:
\begin{equation*}
	\begin{aligned}
		&-\frac{1}{2}\tr\left\{\mathbb{E}\left[D_{\bm{s}}\bfxi{\bfi}\T(X_i)D_{\bm{s}}\right]\cdot\mathbb{E}\left[D_{\bm{s}}\bfxj{\bfj}\T(X_j)D_{\bm{s}}\right]\right\} \\
		=&-\frac{1}{2}\tr\left\{\mathbb{E}\left[\bfxi{\bfi}\T(X_i)\right]D_{\bm{s}}D_{\bm{s}}\cdot\mathbb{E}\left[\bfxj{\bfj}\T(X_j)\right]D_{\bm{s}}D_{\bm{s}}\right\},
	\end{aligned}
	\label{eq:L3}
\end{equation*}
where the multiplication of two diagonal matrix $D_{\bm{s}}$ is also a diagonal matrix with dimension of $m\times m$. Therefore, we define $\blambda$ as a diagonal matrix satisfying:
\begin{equation*}
    \blambda=D_{\bm{s}}^{2}.
\end{equation*}

The constraints of the vector $\bm{s}$ are still applicable to $\blambda$. Using $\blambda$ to replace multiplications in terms \eqref{eq:sub_L_s_1} and \eqref{eq:sub_L_s_3}, we have the equivalent function to \eqref{eq:L_s}:
\begin{subequations}
	\begin{align}
		&\tilde{L}(\bfi,\bfj, \blambda_{ij}) \nonumber\\ &=\mathbb{E}\left[{\bfi}\T(X_i)\blambda_{ij}\bfxj\right] \label{eq:L_A_sub1}\\
		&\hphantom{{}=}-\frac{1}{2}\tr\left\{\mathbb{E}\left[\bfxi{\bfi}\T(X_i)\right]\blambda_{ij}\mathbb{E}\left[\bfxj{\bfj}\T(X_j)\right]\blambda_{ij}\right\}. \label{eq:L_A_sub2}
	\end{align}
	\label{eq:L_A}
\end{subequations}

\section{Proof of Lemma \ref{lem:deriv}}
\label{pf:lemma}

Given function $f$ with respect to matrix $X$, we can connect the matrix derivative with the total differential $\dd f$ by:
\begin{equation}
    \dd f=\sum_{i=1}^{m}\sum_{j=1}^{n}\frac{\p f}{\p X_{i,j}}\dd X_{i,j}=\tr\left(\frac{\p f\T}{\p X}\dd X\right).
    \label{eq:df}
\end{equation}

Note that Equation~\eqref{eq:df} still holds if the matrix $X$ is degraded to a vector $\bm{x}$. 

The gradient computation in Lemma~\ref{lem:deriv} is equivalent to computing the partial derivative regarding $\blambda_{ij}$ in Equation~\eqref{eq:L_A}. To start with, we compute the total differential of first term \eqref{eq:L_A_sub1} as follows:
\begin{subequations}
    \begin{align}
        &\dd\ \mathbb{E}\left[{\bfi}\T(X_i)\blambda_{ij}\bfxj\right] \nonumber \\
        &=\mathbb{E}\left[{\bfi}\T(X_i)d\blambda_{ij}\bfxj\right]\\
        &=\mathbb{E}\left\{\tr\left[{\bfxj\bfi}\T(X_i)d\blambda_{ij}\right]\right\}. \label{eq:dL1_sub2}
    \end{align}
\end{subequations}

Leveraging the Equation~\eqref{eq:df}, we can derive the partial derivative of term \eqref{eq:L_A_sub1} from Equation~\eqref{eq:dL1_sub2} as:
\begin{equation}
    \frac{\p\ \mathbb{E}\left[{\bfi}\T(X_i)\blambda_{ij}\bfxj\right]}{\p\blambda_{ij}}=\mathbb{E}\left[{\bfxj\bfi}\T(X_i)\right].
    \label{eq:part_1}
\end{equation}

Similarly, we repeat the same procedure to compute the total differential of second term \eqref{eq:L_A_sub2}, which is given by:
\begin{subequations}
    \begin{align}
        &-\frac{1}{2}d\ \tr\left\{\mathbb{E}\left[\bfxi{\bfi}\T(X_i)\right]\blambda_{ij}\mathbb{E}\left[\bfxj{\bfj}\T(X_j)\right]\blambda_{ij}\right\} \nonumber \\
		&=-\frac{1}{2}d\ \tr\left[\bsigma_{\bfxi}\blambda_{ij}\bsigma_{\bfxj}\blambda_{ij}\right] \\
		&=-\frac{1}{2}\tr\left[\bsigma_{\bfxj}\blambda_{ij}\bsigma_{\bfxi}d\blambda_{ij}+\bsigma_{\bfxi}\blambda_{ij}\bsigma_{\bfxj}d\blambda_{ij}\right]\label{eq:dL2_sub2},
    \end{align}
\end{subequations}
and then calculate the partial derivative regarding $\blambda_{ij}$ using Equation~\eqref{eq:df} and \eqref{eq:dL2_sub2} as:
\begin{equation}
    \begin{aligned}
        &-\frac{1}{2}\frac{\p\ \tr\left[\bsigma_{\bfxi}\blambda_{ij}\bsigma_{\bfxj}\blambda_{ij}\right]}{\p\blambda_{ij}} \\
        &=-\frac{1}{2}\left\{\left[\bsigma_{\bfxj}\blambda_{ij}\bsigma_{\bfxi}\right]\T+\left[\bsigma_{\bfxi}\blambda_{ij}\bsigma_{\bfxj}\right]\T\right\}.
    \end{aligned}
    \label{eq:part_2}
\end{equation}

Therefore, by adding up Equation \eqref{eq:part_1} and \eqref{eq:part_2}, the derivative of function $\tilde{L}$ is the same as Equation~\eqref{eq:hscore_A_deriv} in Lemma~\ref{lem:deriv}.

\section{Algorithms}

\subsection{Masked Maximal Correlation Loss}
\label{pseudo:mmc}

As the masked maximal correlation loss is the negative of $\tilde{L}$ in Equation~\eqref{eq:hscore_A}, we have:
\begin{equation}
	\mathcal{L}_{corr}=-\mathbb{E}\left[\sum_{i\neq j}^{k}{\bfi}\T(X_i)\blambda_{ij}\bfxj\right]+\frac{1}{2}\sum_{i\neq j}^{k}\tr\left[\bsigma_{\bfxi}\blambda_{ij}\bsigma_{\bfxj}\blambda_{ij}\right].
	\label{eq:corr_loss}
\end{equation}

Based on Equation~\eqref{eq:corr_loss}, we provide the detailed procedure of masked maximal correlation loss calculation in Algorithm~\ref{alg:corr}.

\begin{algorithm}[h]
	\caption{Calculating the masked maximal correlation loss in one batch}
	\label{alg:corr}
	\textbf{Input}: The feature representations $\bm{f}$ and $\bm{g}$ of two modalities $X$ and $Y$ respectively in a batch of size $n$:$\bm{f}_1(X),\cdots,\bm{f}_n(X)$ and $\bm{g}_1(Y),\cdots,\bm{g}_n(Y)$ \\
	\textbf{Parameter}: The PCI-mask: $\blambda$ \\
	\textbf{Output}: The correlation loss: $\mathcal{L}_{corr}$
	\begin{algorithmic}[1]
		\STATE Initialize $\blambda$
		\STATE Compute the zero-mean features representations: \\
		$\tilde{\bm{f}}_i(X)=\bm{f}_i(X)-\frac{1}{n}\sum_{j=1}^{n}\bm{f}_j(X),i=1,\cdots,n$ \\
		$\tilde{\bm{g}}_i(Y)=\bm{g}_i(Y)-\frac{1}{n}\sum_{j=1}^{n}\bm{g}_j(Y),i=1,\cdots,n$
		\STATE Compute the covariance: \\
		$\bsigma_{\tilde{\bm{f}}}=\frac{1}{n-1}\sum_{i=1}^{n}\tilde{\bm{f}}_i(X)\tilde{\bm{f}}_i(X)^{\mathrm{T}}$ \\
		$\bsigma_{\tilde{\bm{g}}}=\frac{1}{n-1}\sum_{i=1}^{n}\tilde{\bm{g}}_i(Y)\tilde{\bm{g}}_i(Y)^{\mathrm{T}}$
		\STATE Compute the output correlation loss: \\
		$\mathcal{L}_{corr}=-\frac{1}{n-1}\sum_{i=1}^{n}\tilde{\bm{f}}_i(X)^{\mathrm{T}}\blambda\tilde{\bm{g}}_i(Y)+\frac{1}{2}tr(\bsigma_{\tilde{\bm{f}}}\blambda\bsigma_{\tilde{\bm{g}}}\blambda)$
	\end{algorithmic}
\end{algorithm}

\subsection{Routine: Truncation Function}
\label{pseudo:routine}

We leverage the truncation function to meet the range constraint in Theorem~\ref{theo:mask_corr} by projecting the element values in PCI-mask to $[0,1]$. The routine of the truncation is given by Algorithm~\ref{alg:trunc}.

\begin{algorithm}[h]
	\caption{Projecting values in PCI-mask leveraging truncation}
	\label{alg:trunc}
	\textbf{Input}: PCI-mask: $\blambda$\\
	\textbf{Parameter}: Rank of PCI-mask: $m$\\
	\textbf{Output}: Projected  PCI-mask: $\bar{\blambda}$
	\begin{algorithmic}[1]
		\STATE Let $\lambda_i$ in $\blambda$ \\
		\FOR{$i=1:m$}
		\IF{$\lambda_i<0$}
		\STATE set $\lambda_i\gets0$
		\ELSIF{$\lambda_i>1$}
		\STATE set $\lambda_i\gets1$
		\ELSE
		\STATE set $\lambda_i\gets\lambda_i$
		\ENDIF
		\ENDFOR
		\STATE \textbf{return} $\bar{\blambda}\gets\blambda$
	\end{algorithmic}
\end{algorithm}

\section{Supplementary Experiments}
\label{append:supp}

\subsection{Implementation Details and Hyperparameters}

This section introduces the implementation details and hyper-parameters we used in the experiment. All the experiments are implemented in PyTorch and trained on NVIDIA 2080Ti with fixed hyper-parameter settings. Five-fold cross-validation is adopted while training models on the training dataset. We set the learning rate of the model to 0.0001 and the batch size to 32. The PCI-masks are randomly initialized. When optimizing the PCI-mask, step size $\alpha$ is set to 2, and tolerable error $e$ is set to 0.01 of the sum threshold. We enable the Adam optimizer to train the model and set the maximum number of training epochs as 200. We fixed other grid-searched/Bayesian-optimized~\cite{mei2022bayesian} hyperparameters during the learning.

\subsection{Experimental Results on BraTS-2015 Dataset}

We provide supplementary results on an older version dataset, BraTS-2015, to validate the effectiveness of our proposed approach.

{\bf BraTS-2015 dataset}: The BraTS-2015 training dataset comprises 220 scans of HGG and 54 scans of LGG, of which four modalities (FLAIR, T1, T1c, and T2) are consistent with BraTS-2020. BraTS-2015 MRI images include four labels: NCR with label 1, ED with label 2, NET with label 3 (which is merged with label 1 in BraTS-2020), and ET with label 4. We perform the same data preprocessing procedure for BraTS-2015.

{\bf Evaluation metrics}: Besides DSC, Sensitivity, Specificity, and PPV, we add Intersection over Union (IoU), also known as the Jaccard similarity coefficient, as an additional metric for evaluation. IoU measures the overlap of the ground truth and prediction region and is positively correlated to DSC. The value of IoU ranges from 0 to 1, with 1 signifying the most significant similarity between prediction and ground truth.

{\bf Segmentation results:} We present the segmentation results of our method on the BraTS-2015 dataset in Table~\ref{tab:baseline_2015}, where our method achieves the best results. Specifically, we show the IoU of each label independently, along with DSC, Sensitivity, Specificity, and PPV for the complete tumor labeled by NCR, ED, NET, and ET together. The baselines include the vanilla U-Net~\cite{ronneberger2015u}, LSTM U-Net~\cite{xu2019lstm}, CI-Autoencoder~\cite{ma2019end}, and U-Net Transformer~\cite{petit2021u}. In the table, the DSC score of our method outperforms the second-best one by 3.9\%, demonstrating the superior performance of our design.

\begin{table}[t]
	\caption{Segmentation result comparisons between our method and baselines of the best single model on BraTS-2015.}
	\centering
	\resizebox{\columnwidth}{!}{
		\begin{tabular}{lC{2.2cm}C{2.2cm}C{2.2cm}C{2.2cm}C{2.2cm}}
			\toprule
			Baselines &  Vanilla U-Net & LSTM U-Net & CI-Autoencoder & U-Net Transformer & Ours \\
			\midrule
			IoU (NCR) & 0.198 & 0.182 & 0.186 & 0.203 & \textbf{0.227} \\
			IoU (ED) & 0.386 & 0.395 & 0.435 & 0.537 & \textbf{0.612} \\
			IoU (NET) & 0.154 & 0.178 & 0.150 & 0.192 & \textbf{0.228} \\
			IoU (ET) & 0.402 & 0.351 & 0.454 & 0.531 & \textbf{0.678} \\
			\midrule
			DSC & 0.745 & 0.780 & 0.811 & 0.829 & \textbf{0.868} \\
			Sensitivity & 0.715 & 0.798 & 0.846 & 0.887 & \textbf{0.918} \\
			Specificity & 0.998 & 0.999 & 1.000 & 0.999 & \textbf{1.000} \\
			PPV & 0.712 & 0.738 & 0.864 & 0.853 & \textbf{0.891} \\
			\bottomrule
		\end{tabular}
	}
	\label{tab:baseline_2015}
\end{table}

\end{document}